\newcommand{\px}{\mathcal{P}_X}
\newcommand{\pxi}{\mathcal{P}_{X_i}}
\newcommand{\sxi}{s_{{i}_{X}}}
\newcommand{\sxj}{s_{{j}_{X}}}
\newcommand{\sxk}{s_{{k}_{X}}}
\newcommand{\sxl}{s_{{l}_{X}}}
\newcommand{\syj}{s_{{j}_{Y}}}
\newcommand{\syjp}{s_{{j'}_{Y}}}
\newcommand{\rl}{\mathbb{R}}
\newcommand{\set}{\mathbf{Set}}
\newcommand{\met}{\mathcal{\mathbf{Met}}}
\newcommand{\metbij}{\met_{bij}}
\newcommand{\prt}{\mathbf{Part}}
\newcommand{\prtbij}{\prt_{bij}}
\newcommand{\bip}{\mathbf{BIP}}
\newcommand{\fprt}[1]{\prt^{\overline{#1}}}
\newcommand{\fprtbij}[1]{\prtbij^{\overline{#1}}}
\newcommand{\fprto}{\fprt{(0,1]^{op}}}
\newcommand{\slink}{\mathcal{SL}}
\newcommand{\slinkr}{\slink_{\mathcal{R}}}
\newcommand{\flatten}{Flatten_{\mu}}
\def\eqref#1{equation~\ref{#1}}
\def\1{\bm{1}}
\DeclareMathAlphabet{\mathsfit}{\encodingdefault}{\sfdefault}{m}{sl}
\SetMathAlphabet{\mathsfit}{bold}{\encodingdefault}{\sfdefault}{bx}{n}
\begin{document}
\title{Flattening Multiparameter Hierarchical Clustering Functors}
%
%\titlerunning{Abbreviated paper title}
% If the paper title is too long for the running head, you can set
% an abbreviated paper title here
%

% \author{Dan Shiebler  \\
% \\
% \texttt{daniel.shiebler@kellogg.ox.ac.uk}
% }

\author{Dan Shiebler 
% \\
% danshiebler.com
% \\
% \texttt{daniel.shiebler@kellogg.ox.ac.uk} 
}
\vspace{-10mm}
\authorrunning{D. Shiebler}
% First names are abbreviated in the running head.
% If there are more than two authors, 'et al.' is used.
%
\vspace{-10mm}
\institute{University of Oxford}
\maketitle              % typeset the header of the contribution
\vspace{-5mm}
\begin{abstract}
We bring together topological data analysis, applied category theory, and machine learning to study multiparameter hierarchical clustering. We begin by introducing a procedure for flattening multiparameter hierarchical clusterings. We demonstrate that this procedure is a functor from a category of multiparameter hierarchical partitions to a category of binary integer programs. We also include empirical results demonstrating its effectiveness. Next, we introduce a Bayesian update algorithm for learning clustering parameters from data. We demonstrate that the composition of this algorithm with our flattening procedure satisfies a consistency property.
\end{abstract}

\vspace{-5mm}
\section{Introduction}\label{introduction}\vspace{-3mm}%!TEX root = main.tex
One of the most useful ways to process a dataset represented as a finite metric space is to cluster the dataset, or break it into groups. An important step is choosing the desired granularity of the clustering, and \textbf{multiparameter hierarchical clustering algorithms} accept a hyperparameter vector to control this.

Since applying a multiparameter hierarchical clustering algorithm with different hyperparameter values may produce different partitions, we can view such algorithms as mapping a finite metric space $(X, d_X)$ to a function from the hyperparameter space to the space of partitions of $X$.  A \textbf{flattening} procedure then maps this function to a single partition. 

Many popular clustering algorithms, such as HDBSCAN \cite{mcinnes2017accelerated} and ToMATo \cite{chazal2013persistence}, include a flattening step that operates on the same intuition that drives the analysis of persistence diagrams in TDA. That is, the most important clusters (homological structures) of a dataset are those which exist at multiple scales (have large differences between their birth and death times). 

In this paper we will characterize and study clustering algorithms as functors, similarly to \cite{carlsson2008persistent,culbertson2016consistency,shiebler2020clustering}. We will particularly focus on multiparameter hierarchical clustering algorithms with partially ordered hyperparameter spaces. This perspective allows us to guarantee that the clustering algorithms we study preserve both non-expansive maps between metric spaces and the ordering of the hyperparameter space. Our contributions are:
\begin{itemize}
    \item We describe an algorithm for flattening multiparameter hierarchical clusterings, which we demonstrate is a functorial map from a category of multiparameter hierarchical partitions to a category of binary integer programs.
    \item We introduce a Bayesian update algorithm for learning a distribution over clustering hyperparameters from data.
    \item We prove that the composition of the Bayesian update algorithm and the flattening procedure is consistent.
\end{itemize}

\vspace{-5mm}
\section{Multiparameter Hierarchical Clustering}\label{2-hierarchical}
\vspace{-3mm}%!TEX root = iclr2021_workshop.tex

In this work we will define flat clustering algorithms to map a finite metric space $(X, d_X)$ to a partition of $X$. We will primarily work with the following categories:
\begin{definition}\vspace{-2mm}
In the category $\met$ objects are finite metric spaces $(X, d_X)$ and morphisms are \textbf{non-expansive maps}, or functions $f: X \rightarrow Y$ such that $d_{Y}(f(x_1),f(x_2)) \leq d_X(x_1,x_2)$.
\end{definition}\vspace{-4mm}
\begin{definition}
In the category $\prt$ objects are tuples $(X, \px)$ where $\px$ is a partition of the set $X$. Morphisms in $\prt$ are functions $f: X \rightarrow Y$ such that if $S \in \px$ then $\exists S' \in Y, f(S) \subseteq S'$.
\end{definition}\vspace{-2mm}
We will also work in the subcategories $\metbij,\prtbij$ of $\met,\prt$ respectively in which morphisms are further restricted to be bijective.
\begin{definition}
Given a subcategory $\mathbf{D}$ of $\met$, a \textbf{flat clustering functor on $\mathbf{D}$} is a functor $F: \mathbf{D} \rightarrow \prt$ that is the identity on the underlying set $X$. In the case that $\mathbf{D}$ is unspecified we simply call $F$ a \textbf{flat clustering functor}. 
\end{definition}\vspace{-1mm}
Now recall that the set of connected components of the $\delta$-Vietoris-Rips complex of $(X, d_X)$ is the partioning of $X$ into subsets with maximum pairwise distance no greater than $\delta$. Given $a \in (0,1]$, an example of a flat clustering functor on $\met$ is the $a$-single linkage functor $\slink(a)$, which maps a metric space to the connected components of its $-log(a)$-Vietoris-Rips complex  \cite{shiebler2020clustering,culbertson2016consistency,carlsson2008persistent}. Given $a_1, a_2 \in (0,1]$, an example of a flat clustering functor on $\metbij$ is the \textbf{robust single linkage functor} $\slinkr(a_1, a_2)$ which maps a metric space $(X, d_X)$ to the connected components of the $-log(a_2)$-Vietoris-Rips complex of $(X, d^{a_1}_X)$ where:
\begin{gather*}
    % Robust single linkage with mutual reachability distance.
    %
    % There are three main additional parameters that are used by rslink, HDBSCAN, and co-UMAP:
    % (1) spread out distances based on PDF so points in sparser regions have larger distances 
    % (2) remove points from the space that are in sparse regions so that they can't bridge points easily. 
    % (3) remove clusters with fewer than K points
    % Note that (1) and (2) are very similar since expanding the distances around a point is functionally equivalent to removing it from the clustering procedure. Note also that if we include a pruning stage via (3), then we lose functoriality over (a,e), since a more refined set might trigger pruning while a less refined set would not. 
    %
    % NOTE: Increasing \epsilon also increases the distances because it pushes points away more aggressively. \epsilon=1 makes each point's distance equal to its eccentricity. \epsilon=0 makes \mu_{X_\epsilon}(x_1)=0, which removes the robustness entirely. 
   d^{a_1}_X(x_1, x_2) = max(d_X(x_1, x_2), \mu_{X_{a_1}}(x_1), \mu_{X_{a_1}}(x_2))
\end{gather*}
and $\mu_{X_{a_1}}(x_1)$ is the distance from $x_1$ to its $\left \lfloor{a_1*|X|}\right \rfloor$th nearest neighbor \cite{NIPS2010_4068}. Intuitively, robust single linkage reduces the impact of dataset noise by increasing distances in sparse regions of the space. Note that robust single linkage is not a flat clustering functor on $\met$ because it includes a $k$-nearest neighbor computation that is sensitive to $|X|$.

Like single linkage and robust single linkage, many flat clustering algorithms are configured by a hyperparameter vector that governs their behavior. In the case that this hyperparameter vector is an element of a partial order $O$ we can represent the output of such an algorithm with a functor $O \rightarrow \prt$.

Recall that $\prt^{O}$ is the category of functors from $O$ to $\prt$ and natural transformations between them. We will write $\fprt{O}$ to represent the subcategory of such functors that commute with the forgetful functor $U: \prt \rightarrow \set$. Given $F: O \rightarrow \prt$ in $\fprt{O}$ we will call the image of $U \circ F$ the \textbf{underlying set} of $F$. Note that there also exists a forgetful functor $\fprt{O} \rightarrow \set$
that maps $F: O \rightarrow \prt$ to its underlying set and that any natural transformation in $\fprt{O}$ between the functors $F_X: O \rightarrow \prt$  and $F_Y: O \rightarrow \prt$ with underlying sets $X$ and $Y$ is fully specified by a function $f: X \rightarrow Y$.
\begin{definition}
Given a partial order $O$ and a subcategory $\mathbf{D}$ of $\met$, an \textbf{$O$-clustering functor on $\mathbf{D}$} is a functor $H: \mathbf{D} \rightarrow \fprt{O}$ that commutes with the forgetful functors from $\mathbf{D}$ and $\fprt{O}$ into $\set$. In the case that $\mathbf{D}$ is unspecified we simply call $H$ an \textbf{$O$-clustering functor}.
\end{definition}\vspace{-1mm}

For example, single linkage $\slink: \met \rightarrow \fprto$ is a $(0,1]^{op}$-clustering functor on $\met$ and
% NOTE: Robust single linkage is a hierarchical functor with (0,1]^op on k since larger k = smaller distances
robust single linkage $\slinkr: \metbij \rightarrow \fprtbij{(0,1]^{op} \times (0,1]^{op}}$ is a $(0,1]^{op} \times (0,1]^{op}$-clustering functor on $\metbij$.

\begin{definition}
Given the functor $F_X \in \fprt{O}$ with underlying set $X$, its \textbf{partition collection} is the set $S_X$ of all subsets $S \subseteq X$ such that there exists some $a \in O$ where $S \in F_X(a)$.
\end{definition}\vspace{-1mm}
We will write the elements of $S_X$ (subsets of $X$) with the notation $S_X = \{s_{{1}_X}, s_{{2}_X}, \cdots, s_{{n}_X}\}$.

In practice it is often convenient to ``flatten'' a functor $F_X \in \fprt{O}$ to a single partition of $X$ by selecting a non-overlapping collection of sets from its partition collection $S_X$. Since we will express this selection problem as a binary integer program we will work in the following category:
\begin{definition}
The objects in $\bip$ are tuples $(n, m, c, A, B, u)$ where $n,m \in \mathbb{N}$, $c, u$ are $m$-element real-valued vectors, $A$ is an $n \times m$ real-valued matrix and $B$ is an $n \times m$ $\{0,1\}$-valued matrix. Intuitively, the tuple $(n, m, c, A, B, u)$ represents the following binary integer program: find an $m$-element $\{0,1\}$-valued vector $v$ that maximizes $c^{T}v$ subject to $Av + Bv \leq u$.

The morphisms between $(n, m, c, A, B, u)$ and $(n', m', c', A', B', u')$ are tuples $(P_c, P_u, P_A, P_{A^{*}}, P_B, P_{B^{*}})$ where
$P_u, P_A$ are $n' \times n$ real-valued matrices,
$P_{A^{*}}, P_c$ are $m \times m'$ real-valued matrices,
$P_B$ is an $n' \times n$ $\{0,1\}$-valued matrix
and
$P_{B^{*}}$ is an $m \times m'$ $\{0,1\}$-valued matrix such that:
\begin{gather*}
    P_c c' = c
    \qquad
    P_u u = u'
    \qquad
    P_A A P_{A^{*}} = A'
    \qquad
    P_B B P_{B^{*}} = B'
\end{gather*}
where the operation $P_B B P_{B^{*}}$ is performed with logical matrix multiplication. Intuitively, a morphism $(P_c, P_u, P_A, P_{A^{*}}, P_B, P_{B^{*}})$ maps the binary integer program ``find an $m$-element $\{0,1\}$-valued vector $v$ that maximizes $(P_c c')^{T}v$ subject to $Av + Bv \leq u$'' to the binary integer program ``find an $m'$-element $\{0,1\}$-valued vector $v$ that maximizes $c'^{T}v$ subject to $P_A A P_{A^{*}}v + P_B B P_{B^{*}}v \leq P_u u$''.
\end{definition}

When we construct a binary integer program from an object $F_X$ in $\fprt{O}$ with underlying set $X$, we weight the elements of its partition collection $S_X$ according to a model of the importance of different regions of $O$. In this work we will only consider $O$ that are Borel measurable, so we can represent this model with a probability measure $\mu$ over $O$. This probabilistic interpretation will be useful in Section \ref{learning} when we update this model with labeled data. We can then view the flattening algorithm as choosing a non-overlapping subset $\mathcal{P}_X \subseteq S_X$ 
(the elements of $\mathcal{P}_X$ are subsets of $X$ where no element of $X$ belongs to more than a single set in $\mathcal{P}_X$)
that maximizes the expectation of the function that maps $a$ to the number of $\sxi \in \mathcal{P}_X$ that are also in $F_X(a)$. Formally, the algorithm maximizes
$\int_{a \in O} 
\left|
\{\sxi \ |\ \sxi \in \mathcal{P}_X,
\sxi \in F_X(a)\}
\right|
d\mu$
. If $\mu$ is uniform this is similar to the Topologically Motivated HDBSCAN described in \cite{mcinnes2017accelerated}. 
\begin{proposition}\label{flatten}
Given a probability measure $\mu$ over $O$, there exists a functor  $\flatten: \fprtbij{O} \rightarrow \bip$ that maps $F_X: O \rightarrow \prtbij$ with partition collection $S_X$ to a tuple $(|S_X|,|S_X|,c,A,B,u)$ such that any solution to the problem ``find an $m$-element $\{0,1\}$-valued vector $v$ that maximizes $c^{T}v$ subject to $Av + Bv \leq u$'' specifies a non-overlapping subset $\mathcal{P}_X \subseteq S_X$ that maximizes
$\int_{a \in O} 
\left|
\{\sxi \ |\ \sxi \in \mathcal{P}_X,
\sxi \in F_X(a)\}
\right|
d\mu$.
\end{proposition}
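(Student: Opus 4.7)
The plan is to exhibit $\flatten$ explicitly on objects by choosing BIP coefficients so that feasible $\{0,1\}$-solutions correspond exactly to non-overlapping subsets of $S_X$, and then to define $\flatten$ on morphisms by incidence matrices built from the underlying bijection. On an object $F_X \in \fprtbij{O}$ with partition collection $S_X = \{\sxi : i = 1, \dots, n\}$, I would set $\flatten(F_X) := (n, n, c, A, 0, u)$ with
\begin{gather*}
c_i := \mu\bigl(\{a \in O : \sxi \in F_X(a)\}\bigr), \qquad
A_{ij} := |\sxi \cap \sxj|, \qquad
u_i := |\sxi|,
\end{gather*}
and interpret $v \in \{0,1\}^n$ as encoding $\px = \{\sxi : v_i = 1\}$.

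The first step to verify is that the constraint $Av \le u$ characterises non-overlap. If $\px$ is non-overlapping, then for any fixed $i$ the family $\{\sxi \cap \sxj : \sxj \in \px\}$ consists of pairwise disjoint subsets of $\sxi$, so summing cardinalities gives at most $|\sxi|$. Conversely, if $\sxi, \sxj \in \px$ with $i \neq j$ and $\sxi \cap \sxj \neq \emptyset$, then the $i$-th constraint evaluates to at least $|\sxi| + |\sxi \cap \sxj| > |\sxi|$, a contradiction. The second step is to confirm the objective computes the target integral: swapping the finite sum and the integral (for which I need $\{a : \sxi \in F_X(a)\}$ measurable, a consequence of how $F_X$ coarsens along $O$) gives $c^T v = \sum_i v_i \int_O \mathbbm{1}[\sxi \in F_X(a)]\,d\mu(a) = \int_O |\{\sxi \in \px : \sxi \in F_X(a)\}|\,d\mu(a)$.

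On morphisms, given $\eta : F_X \to F_Y$ specified by a bijection $f : X \to Y$, I would build the incidence matrices from the relation $R := \{(s, s') \in S_X \times S_Y : f(s) \subseteq s'\}$, using it to define $P_c$ and $P_{A^*}$ and its transpose to define $P_u$ and $P_A$, while taking $P_B, P_{B^*}$ to be zero (which suffices because $B = B' = 0$). Functoriality then reduces to matrix multiplication, and the defining equations $P_c c' = c$, $P_u u = u'$ and $P_A A P_{A^*} = A'$ reduce to combinatorial identities that exploit $|f^{-1}(T_1) \cap f^{-1}(T_2)| = |T_1 \cap T_2|$ together with the coarsening of $F_Y(a)$ by the pushforward of $F_X(a)$.

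The main obstacle is this last step: calibrating the entries (and, if needed, the normalisations) of the $P$-matrices so that all four defining equations hold simultaneously for a single choice, given that a fixed $\sxi$ may relate to several $s_{k_Y} \in S_Y$ (and vice versa) as $a$ varies. Getting the counts right for $P_A A P_{A^*} = A'$ in particular requires an accounting argument that uses the block-refinement property of $\eta$ at each $a$ combined with the bijectivity of $f$, and then verifying that the chosen matrices respect composition.
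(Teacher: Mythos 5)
Your object-level construction is correct, and it is a genuinely different (and arguably cleaner) encoding of the non-overlap constraint than the paper's: you use the single counting constraint $\sum_j |\sxi \cap \sxj| v_j \leq |\sxi|$ with $B=0$, whereas the paper uses a big-$M$ style constraint with $A$ diagonal (entries $|S_X|-1$), $B$ the intersection-indicator matrix, and $u_i = |S_X|$. Your disjointness argument for both directions of the feasibility equivalence is sound, and the identification of the objective with the integral is the same as the paper's.

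The genuine gap is the morphism part, which you explicitly defer as ``the main obstacle'' --- but this is precisely where the content of the proposition lives, since the claim is the existence of a \emph{functor} $\flatten: \fprtbij{O} \rightarrow \bip$ and a morphism in $\bip$ is by definition a tuple of matrices satisfying $P_c c' = c$, $P_u u = u'$, $P_A A P_{A^*} = A'$, $P_B B P_{B^*} = B'$. You have not exhibited these matrices or verified the equations, and the naive choice you sketch does not work as stated: taking $P_A$ and $P_{A^*}$ to be the raw incidence relation $R = \{(s,s') : f(s) \subseteq s'\}$ and its transpose overcounts, because a single $\syj$ may contain the $f$-images of many nested sets $\sxi \in S_X$ coming from different parameter values $a$. (The equation $P_A A P_{A^*} = A'$ \emph{can} be rescued by writing $A = MM^{T}$ for the set-element incidence matrix $M$ and using, for each $j$, only the blocks of $F_X(a_j)$ that partition $f^{-1}(\syj)$ for one chosen $a_j$ with $\syj \in F_Y(a_j)$ --- but this depends on choices of $a_j$, and you would then still need to check that these choices can be made compatibly with composition, and to produce a compatible $P_c$, for which something like the paper's conditional-measure ratio $P_{c_{i,j}} = \mu(\{a : \sxi \in F_X(a),\ \syj \in F_Y(a)\})/\mu(\{a : \syj \in F_Y(a)\})$ is needed.) The paper's proof spends most of its length writing down explicit $P$-matrices adapted to its particular $c, A, B, u$ and verifying the four equations entry by entry; an analogous calibration for your $A_{ij} = |\sxi \cap \sxj|$ and $u_i = |\sxi|$ is plausible but is exactly the part you have left undone, so as it stands the proposal does not establish the proposition.
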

\begin{proof}

Given a probability measure $\mu$ over $O$, $\flatten: \fprtbij{O} \rightarrow \bip$ maps the functor $F_X: O \rightarrow \prtbij$ with partition collection $S_X$ to the binary integer program $(|S_X|,|S_X|,c,A,B,u)$ where $c,u$ are $|S_X|$-element real-valued vectors and $A,B$ are respectively real-valued and $\{0,1\}$-valued $|S_X| \times |S_X|$ matrices where:
\begin{gather*}
    u_i = |S_X|
    \qquad
    c_i = \int_{\{a\ |\ \sxi \in F_X(a)\}}
    d\mu
    \\
    A_{i, j} = 
    \begin{cases}
        |S_X| - 1 & i=j
        \\
        0 & else
    \end{cases}
    \qquad
    B_{i, j} = 
    \begin{cases}
        1 & \sxi \cap \sxj \neq \varnothing
        \\
        0 & else
    \end{cases}
\end{gather*}
A natural transformation between $F_X, F_Y: O \rightarrow \prtbij$ with underlying sets $X, Y$ and partition collections $S_X,S_Y$ that is specified by the surjective function $f: X \rightarrow Y$ is sent to the tuple $(P_c, P_u, P_A, P^T_A, P_B, P^T_B)$ where
$P_c$ is an $|S_X| \times |S_Y|$ real-valued matrix, $P_A, P_u$ are $|S_Y| \times |S_X|$ real-valued matrices, and $P_B$ is an $|S_Y| \times |S_X|$ $\{0,1\}$-valued matrix such that:
\begin{gather*}
    P_{c_{i,j}}  = \begin{cases}
        \frac{
        \int_{\{a\ |\ \sxi \in F_X(a),\ \syj \in F_Y(a)\}} \ d\mu
        }{
        \int_{\{a\ |\ \syj \in F_Y(a)\}}
        \ d\mu  } & f(\sxi) \subseteq \syj 
        \\
        0 & else
    \end{cases}
    \qquad
    P_{u_{j, i}} = \begin{cases}
        \frac{|S_Y|}{|S_X|} & i = j \\
        0 & else
    \end{cases}
    \\
    P_{A_{j,i}} = \begin{cases}
        \sqrt{\frac{|S_Y| - 1}{|S_X| - 1}} & i = j \\
        0 & else
    \end{cases}
    \qquad
    P_{B_{j,i}}  = \begin{cases}
        1 & f(\sxi) \subseteq \syj
        \\
        0 & else
    \end{cases}
\end{gather*}
First we will show that any feasible solution to the integer program $\flatten F_X$ corresponds to a selection of elements from $S_X$ with no overlaps. If $\sxi \cap \sxj \neq \varnothing$, then the $i$th row of $A + B$ will have $|S_X|$ in position $i$ and $1$ in position $j$. This implies that if $v_i = 1$ then $(A+B)_{i}^{T} v \leq |S_X|$ if and only if $v_j = 0$. Note also that by the definition of binary integer programming this is the selection of elements that maximizes:
\begin{gather*}
    \sum_{\sxi \in \mathcal{P}_X}
    c_i
    =
    \sum_{\sxi \in \mathcal{P}_X}
    \int_{\{a\ |\ \sxi \in F_X(a)\}}
        d\mu
    =
    \int_{a \in O} 
    \left|
    \{\sxi \ |\ \sxi \in \mathcal{P}_X,
    \sxi \in F_X(a)\}
    \right|
    d\mu
\end{gather*}
Next, we will show that $\flatten$ is a functor. Consider $F_X, F_Y: O \rightarrow \prtbij$ with underlying sets $X, Y$ and partition collections $S_X,S_Y$ and suppose:
\begin{gather*}
    \flatten F_X = (|S_X|, |S_X|, c_X, A_X, B_X, u_X)
    \qquad
    \flatten F_Y = (|S_Y|, |S_Y|, c_Y, A_Y, B_Y, u_Y)
\end{gather*}
Consider also a natural transformation specified by the function $f: X \rightarrow Y$ and define the image of $\flatten$ on this natural transformation to be $(P_{c}, P_u, P_{A}, P^T_{A}, P_{B}, P^T_{B})$. We first need to show that:
\begin{gather*}
    P_{c} c_Y = c_X
    \qquad
    P_u u_X = u_Y
    \qquad
    P_{A} A_X P^{T}_{A} = A_Y
    \qquad
    P_{B} B_X P^{T}_{B} = B_Y
\end{gather*}
Where $P_{B} B_X P^{T}_{B} = B_Y$ is performed with logical matrix multiplication. In order to see that $P_{c} c_Y = c_X$, note that:
\begin{align*}
    (P_{c} c_Y)_{i} =
    % 
    % Definition of P_{f_VB}
    \sum_{\{j\ |\  f(\sxi) \subseteq \syj\}}
    \left(
    \int_{\{a\ |\ \syj \in F_Y(a)\}} d\mu\right)
    \left(\frac{
    \int_{\{a\ |\ \sxi \in F_X(a),\ \syj \in F_Y(a)\}} \ d\mu
    }{
    \int_{\{a\ |\ \syj \in F_Y(a)\}}
    \ d\mu  } \right)
    = \\
    % 
    % Cancel out the denominator
    \sum_{\{j\ |\  f(\sxi) \subseteq \syj\}}
    \int_{\{a\ |\ \sxi \in F_X(a),\ \syj \in F_Y(a)\}} \ d\mu
    =
    \int_{\{a\ |\ \sxi \in F_X(a)\}}
    \ d\mu = 
    c_{X_i}
\end{align*}
Next, to see that $P_u u_X = u_Y$, note that
$
(P_u u_X)_{j}
=
\frac{|S_Y|}{|S_X|}|S_X|
=
|S_Y| 
= 
u_{Y_{j}}
$.
Next, to see that $P_{A} A_X P^{T}_{A} = A_Y$, recall that $P_{A}$ is an $|S_Y| \times |S_X|$ matrix and note that since $f$ is surjective it must be that $S_Y \leq S_X$. Therefore both the left $|S_Y| \times |S_Y|$ submatrix of $P_{A}$ and the top $|S_Y| \times |S_Y|$ submatrix of $P_{A}^T$ are diagonal, so the product $P_{A} A_X P^{T}_{A}$ is a diagonal $|S_Y| \times |S_Y|$ matrix with:
\begin{gather*}
    (P_{A} A_X P^{T}_{A})_{jj}
    =
    P_{A_{jj}} A_{X_{jj}} P^{T}_{A_{jj}}
    =
   \frac{|S_Y| - 1}{|S_X| - 1} (|S_X| - 1)
    =
    |S_Y| - 1
    = 
    A_{Y_{jj}}
\end{gather*}
Next, to see that $P_{B} B_X P^{T}_{B} = B_Y$, note first that $P_{B} B_X$ is a $\{0,1\}$-valued $|S_Y| \times |S_X|$ matrix where:
\begin{gather*}
    (P_{B} B_X)_{ji}
    =
    % 
    % Logical Matrix multiplication
    \exists_{k=1...|S_X|}
    P_{B_{j, k}} \wedge B_{X_{k, i}}
    =
    % 
    % Sum
    \exists\ \sxk \in S_X, f(\sxk) \subseteq \syj \wedge
    \sxk \cap \sxi \neq \varnothing
\end{gather*}
And therefore that:
\begin{align*}
    % NOTE: We use j,j' here because both rows and columns are j-indexed
    (P_{B} B_X P^{T}_{B})_{jj'}
    = \\
    % 
    % Definition of  P_{f_{B}}
    \exists\ \sxl \in S_X
    \left(
    \exists\ \sxk \in S_X, f(\sxk) \subseteq \syj
    \wedge
     \sxk \cap \sxl \neq \varnothing
    \right)
    \wedge
    \left(
    f(\sxl) \subseteq \syjp
    \right)
    = \\
    % 
    % Rearrange terms
    \exists\ \sxl,\sxk \in S_X,
    f(\sxk) \subseteq \syj
    \wedge
    f(\sxl) \subseteq \syjp
    \wedge
    \sxk \cap \sxl \neq \varnothing
    = \\
    %
    % There is some point x that is in both \sxk and \sxl. f(x) must be in both \syj and \syjp, so they are overlapping
    %
    \syj \cap \syjp  \neq \varnothing
\end{align*}
Finally, note that $\flatten$ preserves the identity since $P_{B} = P_{A} = I$ when $S_X = S_Y$ and it preserves composition by the laws of matrix multiplication. 
\qed
\end{proof}

For example, if $\mu$ is uniform then the connected components of the Vietoris-Rips filtration of $(X, d_X)$ that have the largest differences between their birth and death times will be a solution to $(\flatten \circ \slink)(X, d_X)$. Note also that $\flatten$ is only functorial over $\fprtbij{O}$, and not all of $\fprt{O}$. Intuitively, this is because $\flatten$ maps natural transformations between elements of $\fprtbij{O}$ to linear maps that only exist when the functions underlying these 
natural transformations are bijective.

\vspace{-5mm}
\subsection{The Multiparameter Flattening Algorithm}

Given an $O$-clustering functor $H$ and a distribution $\mu$ over $O$ we can use Monte Carlo integration and $\flatten$ to implement the following algorithm:
\begin{algorithmic}[1]\label{flatteningalgorithm}
\Procedure{MultiparameterFlattening}{$H, \mu, (X, d_X), n$}
  \State Initialize an empty list $L$
  \State Repeat $n$ times:
  \State \indent Sample the hyperparameter vector $a$ according to $\mu$
  \State \indent Add each set in $H(X, d_X)(a)$ to $L$
  \State Define $S_X$ to be the list of unique elements of $L$
  \State Define $c$ such that $c_i$ is the number of times that $\sxi$ appears in $L$ 
  \State Set $A, B, u$ with $\flatten$
  \State Return the solution to the binary integer program $(|S_X|, |S_X|, c, A, B, u)$
\EndProcedure
\end{algorithmic}
We include an example of this procedure on Github
\footnote{https://github.com/dshieble/FunctorialHyperparameters}
that builds on McInnes et. al.'s \cite{mcinnes2017accelerated}'s HDBSCAN implementation. In Table \ref{performancetable} we demonstrate that applying this procedure and solving the resulting binary integer program can perform better than choosing an optimal parameter value.

\vspace{-5mm}
\begin{table}[]
    \centering
    % \begin{adjustwidth}{-.75cm}{}
    \begin{tabular}{|c|c|c|}
        \hline
        Algorithm &
        \multicolumn{1}{|p{4.0cm}|}{\centering Adjusted Rand Score on \\ Fashion MNIST Dataset} &
        \multicolumn{1}{|p{4.0cm}|}{\centering Adjusted Rand Score on \\ 20 Newsgroups Dataset} \\
        \hline
        \hline
        \multicolumn{1}{|p{5.0cm}|}{\centering HDBSCAN With Optimal\\ Distance Scaling Parameter $\alpha$
        }
        & $0.217$ & $0.181$ \\
        \hline
        \multicolumn{1}{|p{5.0cm}|}{\centering HDBSCAN With Flattening Over\\ Distance Scaling Parameter $\alpha$
        }
        & $0.262$ & $0.231$ \\
        \hline
    \end{tabular}
    \caption{
    We compare the performance of applying $\flatten$ to HDBSCAN with simply running HDBSCAN with the value of the distance scaling parameter $\alpha$ that achieves the best Adjusted Rand Score.
    We evaluate over the Fashion MNIST \cite{xiao2017/online} and 20 Newsgroups \cite{lang1995newsweeder} datasets by using the scikit-learn implementation of the Adjusted Rand Score  \cite{scikit-learn}. We see that the $\flatten$ procedure performs consistently better, which suggests that it may be a good option for unsupervised learning applications such as data visualization or pre-processing. 
    }
\label{performancetable}
\end{table}
\vspace{-7mm}

\begin{figure}
\centering
\includegraphics[width=9cm,height=5cm]{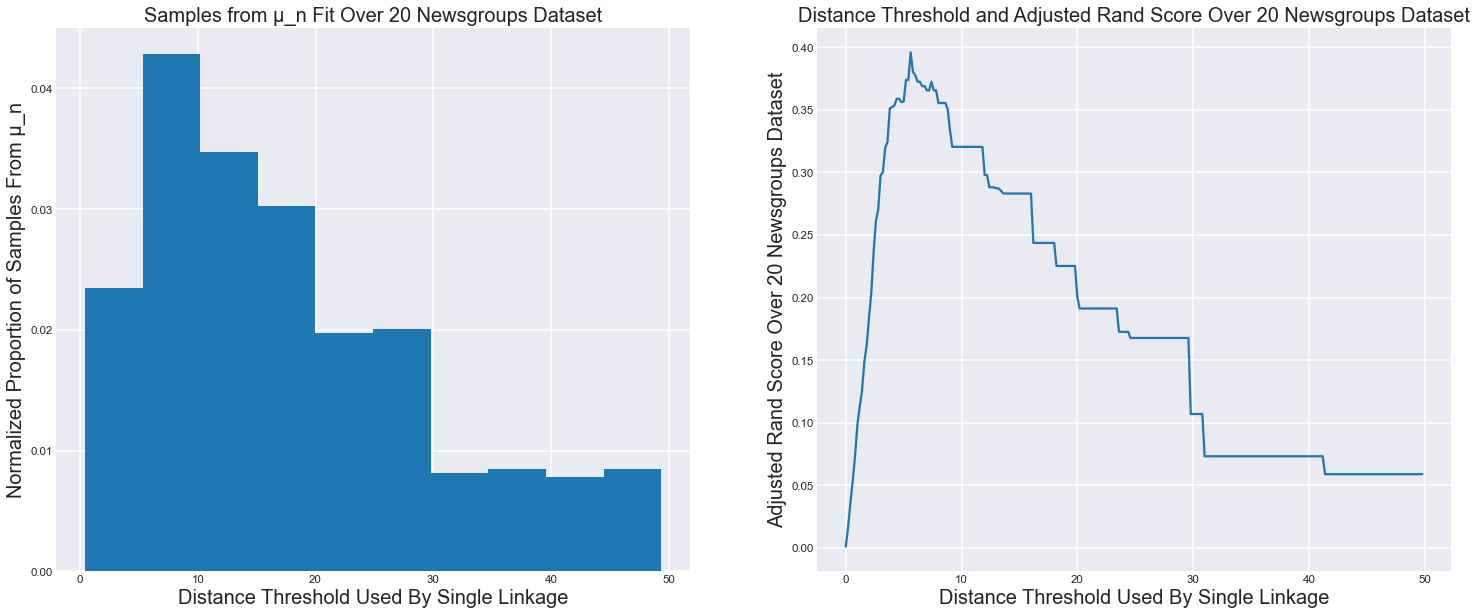}
\vspace{-2mm}
\caption{
If we define $H=\slink$ and apply Equation \ref{bayesianupdate} to learn $\mu_n$, then samples from $\mu_n$ are distance thresholds for single linkage. We see that the samples tend to be drawn from the region where the Adjusted Rand Score is highest (code on Github).}
\label{sampledist}
\end{figure}
\vspace{-2mm}

\vspace{-5mm}\subsection{Multiparameter Flattening with Supervision}\label{learning}
One of the most important components in the Multiparameter Flattening Algorithm is the choice of distribution $\mu$ over $O$. For example, if $O=(0,1]^{op}$ and $\mu$ is the Dirac distribution at $a$ then $\slink(X, d_X)(a)$ will be a solution to $(\flatten \circ \slink)(X, d_X)$.

We can leverage the importance of $\mu$ to enable our flattening procedure to learn from labeled data. Formally given an $O$-clustering functor $H$, a metric space $(X, d_{X})$, and an observed partition $\px$ of $X$ (the labels) we can use Bayes rule to define the probability measure $\mu_{\px}$ over $O$ where
$
% NOTE: There are two relevant partitions in this data - the one that is provided (\px) and H(X, d_X)(a). H(X, d_X) is part of the definition of \gamma, which then maps over the space of \px
%
% NOTE: This is equation (4) from https://ocw.mit.edu/courses/mathematics/18-05-introduction-to-probability-and-statistics-spring-2014/readings/MIT18_05S14_Reading13a.pdf
\mu_{\px}(\sigma) =
    \frac{\int_{a \in \sigma}
    \gamma_{X}(\px \ |\ a)\  d\mu
    }{
    \int_{a \in O}
    \gamma_{X}(\px \ |\ a)\  d\mu
    }
$
if $\sigma$ is an element of the Borel algebra of $O$ and for each $a \in O$ the map $\gamma_{X}(\_ \ |\ a)$ is a probability mass function over the finite set of all partitions of $X$. There are several possible choices of $\gamma_{X}$. Intuitively, we want $\gamma_{X}(\px \ |\ a)$ to be large when $H(X, d_X)(a)$ and $\px$ are similar. The simplest choice would be
$
\gamma_{X}(\px \ |\ a) = \begin{cases}
    1 & H(X, d_X)(a) = \px
    \\
    0 & else
\end{cases}
$, but a more robust strategy would be to use the \textbf{Rand index}, which measures how well two partitions agree on each pair of distinct points \cite{rand1971objective}. That is:
\begin{gather}\label{randgamma}
% NOTE: The denominator here is NOT the same denominator that we use in the rand index. It is a normalizing factor that ensures that gamma sums to 1
%
\gamma_{X}(\px \ |\ a) =
\frac{|both(\px)| + |neither(\px)|}{
    \sum_{\px' \in \mathbf{P}_X}
    |both(\px')| + |neither(\px')|
}
\end{gather}
where $\mathbf{P}_X$ is the set of all partitions of $X$ and:
\begin{gather*}
    both(\px) = \{x_i, x_j \ |\ 
    \exists s_X \in \px, x_i, x_j \in s_X
    \wedge 
    \exists s'_X \in H(X, d_X)(a), x_i, x_j \in s'_X\}
    \\
    neither(\px) = \{x_i, x_j \ |\
    \not\exists s_X \in \px, x_i, x_j \in s_X
    \wedge 
    \not\exists s'_X \in H(X, d_X)(a), x_i, x_j \in s'_X\}
\end{gather*}
Note that by definition $\sum_{\px \in \mathbf{P}_X} \gamma_{X}(\px \ |\ a) = 1$.
% Intuitively, $\mu_{\px}$ places higher weight on the $a \in (0,1]^d$ such that $H(X, d_X)(a)$ is in closer agreement with the observed data.
Now suppose we have a set of tuples
$\{
(X_1, d_{X_1}, \mathcal{P}_{X_1}),
(X_2, d_{X_2}, \mathcal{P}_{X_2}),
\cdots,
(X_n, d_{X_n}, \mathcal{P}_{X_n})
\}$
where each $(X_i, d_{X_i})$ is a metric space and each $\pxi$ is a partition of $X_i$. 
Given an initial probability measure $\mu_{0}$ over $O$ we can use \textbf{Bayesian updating} to learn a posterior distribution over $O$ by iteratively setting:
\begin{gather}\label{bayesianupdate}
    \mu_{i}(\sigma) =
    \frac{\int_{a \in \sigma} \gamma_{X_i}(\pxi\ |\ a)
    \ d\mu_{i-1}
    }{
    \int_{a \in O} \gamma_{X_i}(\pxi\ |\ a)\  d\mu_{i-1}}
\end{gather}
In Figure \ref{sampledist} we show the results of this procedure. Under mild conditions the functor $Flatten_{\mu_{n}} \circ H$ maps  $(X, d_X)$ to an optimization problem with an optimal solution that is consistent with these $n$ observations. Formally:
\begin{proposition}\label{muflattenconvergence}
% NOTE: We need a fixed region in order to define a uniform distribution
Given $d_1, d_2 \in \mathbb{N}$ suppose we have a compact region $R \subseteq \rl^{d_1}$ and an $O$-clustering functor $H$ where $O$ is a subset of $\rl^{d_2}$. Define $d_R$ to be Euclidean distance
and assume that:
\begin{itemize}
    \item $H$ is \textbf{$R$-identifiable}: for each $a \in O$ there exists some pair of $k$-element subsets $X_1, X_2 \subset R$ with $H(X_1, d_R)(a) \neq H(X_2, d_R)(a)$
    \item $H$ is \textbf{$R$-smooth}: for any $k$-element $X \subset R$ and $a \in O$ there exists a neighborhood $B_a$ of $a$ where for $a' \in B_a$ we have that $H(X, d_R)(a) = H(X, d_R)(a')$
\end{itemize}
Now suppose we sample $a_{*} \in O$ according to the uniform distribution $\mu_{0}$ over $O$ and for each $i>0$ uniformly select a $k$-element subset $X_i$ from $R$, set $(X_i, \mathcal{P}_{X_i}) = H(X_i, d_{R})(a_{*})$ and set $\mu_{i}$ as in Equation \ref{bayesianupdate}.

Then for any $k$-element $X \subset R$ there $\mu_0$-a.s. (almost surely) exists some $m$ such that for $n \geq m$, the partition $H(X,  d_{R})(a_{*})$ is an optimal solution to $(Flatten_{\mu_n} \circ H)(X,  d_{R})$.
\end{proposition}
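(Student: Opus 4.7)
The plan is to split the argument into a combinatorial reduction and a Bayesian consistency step. First, I fix a $k$-element $X \subset R$ and write $\mathcal{P}^* = H(X, d_R)(a_*)$. Applying $R$-smoothness at $(X, a_*)$, I pick an open neighborhood $B \subseteq O$ of $a_*$ on which $H(X, d_R)(\cdot)$ is constantly $\mathcal{P}^*$. The goal then reduces to two claims: (i) there is a threshold $\tau_k < 1$ such that whenever $\mu_n(B) \geq \tau_k$, the partition $\mathcal{P}^*$ is an optimal solution to $(Flatten_{\mu_n}\circ H)(X, d_R)$; and (ii) $\mu_n(B) \to 1$ almost surely.

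For (i), I would exploit Proposition~\ref{flatten}: the BIP weights are $c_s = \mu_n(\{a : s \in H(X, d_R)(a)\})$, so $c_s \geq \mu_n(B)$ for $s \in \mathcal{P}^*$ and $c_s \leq 1 - \mu_n(B)$ for $s \notin \mathcal{P}^*$. The objective at $\mathcal{P}^*$ is therefore at least $|\mathcal{P}^*|\,\mu_n(B)$, while for any competing non-overlapping $\mathcal{Q}\neq\mathcal{P}^*$ it is at most $|\mathcal{Q}\cap\mathcal{P}^*| + |\mathcal{Q}\setminus\mathcal{P}^*|(1-\mu_n(B))$. Since $\mathcal{P}^*$ partitions $X$, every $s \in \mathcal{Q}\setminus\mathcal{P}^*$ forces the omission of at least one $s'\in\mathcal{P}^*\setminus\mathcal{Q}$, and a short counting calculation using $|\mathcal{Q}|, |\mathcal{P}^*| \leq k$ yields an explicit $\tau_k < 1$ above which $\mathcal{P}^*$ strictly dominates.

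Claim (ii) is the main obstacle and requires a Bayesian-consistency argument. By the strong law of large numbers applied to the i.i.d.\ sample $(X_i)$, for each fixed $a\in O$,
\[
\tfrac{1}{n}\sum_{i=1}^{n}\log \gamma_{X_i}(H(X_i, d_R)(a_*)\mid a) \xrightarrow{\text{a.s.}} L(a) := \E_{X}\bigl[\log \gamma_X(H(X, d_R)(a_*)\mid a)\bigr],
\]
where the expectation is over uniformly random $k$-element $X \subset R$. Because $\gamma_X(\cdot\mid a)$ is a probability mass function on the finite set of partitions of $X$, Gibbs' inequality gives $L(a) \leq L(a_*)$, with equality only when $H(X, d_R)(a) = H(X, d_R)(a_*)$ for almost every $X$; $R$-identifiability rules this out for $a \neq a_*$, while $R$-smoothness makes $a \mapsto L(a)$ upper semi-continuous. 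A standard Schwartz/Laplace posterior-concentration argument then forces $\mu_n(O\setminus B) \to 0$ almost surely, which combined with (i) finishes the proof.

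The hardest point is upgrading the pointwise Gibbs inequality $L(a) < L(a_*)$ to the uniform separation $\sup_{a\notin B} L(a) < L(a_*)$ needed by the concentration argument, since $O$ is not assumed compact. I expect to handle this via $R$-smoothness: each sampled $H(X_i, d_R)$ cuts $O$ into finitely many cells on any bounded sub-region, so the likelihood is locally piecewise constant, which allows me to first localize the posterior to a compact sub-region of $\mathrm{supp}(\mu_0)$ via a Borel-Cantelli/tightness argument and then apply a finite-cell compactness argument inside that sub-region to produce the required uniform gap.
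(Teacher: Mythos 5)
Your two-step decomposition (a combinatorial threshold claim plus a posterior-concentration claim) has the same shape as the paper's argument, and your claim (i) is in fact a welcome sharpening of it: the counting argument you sketch does work, since each $s\in\mathcal{Q}\setminus\mathcal{P}^*$ forces the omission of some block of $\mathcal{P}^*$, the weights from Proposition~\ref{flatten} satisfy $c_s\geq\mu_n(B)$ on $\mathcal{P}^*$ and $c_s\leq 1-\mu_n(B)$ off it, and $|\mathcal{P}^*|+|\mathcal{Q}|\leq 2k$ yields an explicit threshold $\tau_k=1-\tfrac{1}{2k}$. The paper compresses this step into one sentence, so making it quantitative is genuinely useful.

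The gap is in claim (ii). The paper gets posterior concentration from Doob's theorem \cite{doob49} (Theorem 2.4 of \cite{miller2018detailed}): identifiability of $a\mapsto\eta_a$ yields consistency for $\mu_0$-almost every $a_*$, which is precisely the ``$\mu_0$-a.s.''\ form of the conclusion and requires no uniform likelihood gap. Your Schwartz-type route fails at the Gibbs step: the observations $\mathcal{P}_{X_i}=H(X_i,d_R)(a_*)$ are a deterministic function of $X_i$, while the likelihood $\gamma_{X_i}(\cdot\mid a_*)$ (e.g.\ the Rand-index version of Equation~\ref{randgamma}) is not a point mass, so the model is misspecified even at $a_*$. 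Gibbs' inequality compares $\mathbb{E}_p[\log q]$ with $\mathbb{E}_p[\log p]$ for the \emph{true} conditional law $p$, which here is a Dirac mass; it therefore only gives $L(a)\leq 0$ for every $a$, not $L(a)\leq L(a_*)$, and for a misspecified model the posterior concentrates on the maximizers of $L$, which need not contain $a_*$ without extra assumptions on $\gamma$ (for the Dirac choice of $\gamma$ your step is trivially fine, but the proposition is stated for the general update of Equation~\ref{bayesianupdate}). The remaining difficulties you flag are also real: the SLLN needs integrability of $\log\gamma$, which fails where $\gamma=0$, and upgrading to $\sup_{a\notin B}L(a)<L(a_*)$ needs both compactness-type control and an identifiability property strictly stronger than the stated $R$-identifiability, which as written only says $X\mapsto H(X,d_R)(a)$ is non-constant for each fixed $a$, not that distinct parameters induce distinct observation laws. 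I would keep your version of (i) but obtain (ii) via Doob's theorem, accepting the almost-sure-in-$a_*$ conclusion that the statement actually asserts.
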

\begin{proof}
% Doob's theorem https://arxiv.org/pdf/1801.03122.pdf 
We will use Doob's theorem \cite{doob49} to prove this.  Suppose $\mathcal{P}_{R}$ is the set of pairs $(X, \mathcal{P}_X)$ of finite $k$-element subsets $X \in R$ and partitions $\mathcal{P}_X$ of $X$. Define $\lambda_R$ to be the uniform distribution on the set of finite $k$-element subsets of $R$, and for each $a \in O$ define
$
\eta_a(\sigma) = 
\int_{(X, \mathcal{P}_X) \in \sigma}
\gamma_X(\mathcal{P}_X \ |\ a)\ d\lambda_R
$
where $\sigma$ is an element of the Borel algebra of $\mathcal{P}_{R}$. Now since $H$ is $R$-identifiable, $a \neq a' \implies \eta_a \neq \eta_{a'}$ and Theorem 2.4 in \cite{miller2018detailed} holds. Therefore for any $k$-element $X \subset R$, ball $B_{a_*}$ centered at $a_*$, and $\epsilon > 0$, there $\mu_0$-a.s. (almost surely) exists some $m$ such that for $n \geq m$ we have that $\mu_{n}(B_{a_*}) \geq 1 - \epsilon$. Therefore, no solution to $(Flatten_{\mu_n} \circ H)(X,  d_{R})$ can be improved by including sets that only exist in partitions of $X$ formed from $H(X,  d_{R})(a')$ where $a' \not\in B_{a_*}$. Since $H$ is $R$-smooth, this implies that the partition $H(X,  d_{R})(a_{*})$ is an optimal solution to $(Flatten_{\mu_n} \circ H)(X,  d_{R})$
\qed\end{proof}

\vspace{-5mm}
\section{Discussion and Next Steps}\label{5-discussion}\vspace{-3mm}%!TEX root = main.tex

One issue with the $\flatten$ procedure is that it reduces to a binary integer program, which can be very slow to solve. In the case that the hyperparameters of a multiparameter hierarchical clustering algorithm form a total order, we can organize its outputs in a merge tree or dendrogram, and then solve the optimization problem with a tree traversal \cite{chazal2013persistence,mcinnes2017accelerated}.

However, it is not always possible to use this strategy on hierarchical clustering algorithms that accept multiple real-valued hyperparameters (e.g. the hyperparameter space is a general subset of $\rl^d$ where $d > 1$). In this case it is possible that there exist clusterings $c \in H(a)$ and $c' \in H(a')$ that are neither disjoint nor in a containment relationship.
% This is closely related to the phenomenon that unlike single parameter persistence modules, multiparameter persistence modules cannot be decomposed into a direct sum of simple modules \cite{lesnick2015interactive}.
There are some ways to get around this limitation, however.  For example, Rolle and Scoccola \cite{rolle2020stable} index hyperparameters by a curve $\gamma$ in $\rl^d$, rather than all of $\rl^d$. In this way the hyperparameter space remains a total order. In the future we plan to explore other strategies to solve the flattening optimization problem efficiently.

\bibliographystyle{splncs04}
\bibliography{main}

% \appendix
% \section{Appendix}
% \input{999-appendix}

% %
% % ---- Bibliography ----
% %
% % BibTeX users should specify bibliography style 'splncs04'.
% % References will then be sorted and formatted in the correct style.
% %
% % \bibliographystyle{splncs04}
% % \bibliography{mybibliography}
% %
% \begin{thebibliography}{8}
% \bibitem{ref_article1}
% Author, F.: Article title. Journal \textbf{2}(5), 99--110 (2016)

% \bibitem{ref_lncs1}
% Author, F., Author, S.: Title of a proceedings paper. In: Editor,
% F., Editor, S. (eds.) CONFERENCE 2016, LNCS, vol. 9999, pp. 1--13.
% Springer, Heidelberg (2016). \doi{10.10007/1234567890}

% \bibitem{ref_book1}
% Author, F., Author, S., Author, T.: Book title. 2nd edn. Publisher,
% Location (1999)

% \bibitem{ref_proc1}
% Author, A.-B.: Contribution title. In: 9th International Proceedings
% on Proceedings, pp. 1--2. Publisher, Location (2010)

% \bibitem{ref_url1}
% LNCS Homepage, \url{http://www.springer.com/lncs}. Last accessed 4
% Oct 2017
% \end{thebibliography}
\end{document}